\newcommand{\cmark}{\ding{51}}
\newcommand{\xmark}{\ding{55}}
\newtheorem{prop}{Proposition}
\title{Exploiting Channel Similarity for Accelerating Deep Convolutional Neural Networks}
\author{Yunxiang Zhang$^{1}$\thanks{Equal contribution.}, \, \textbf{Chenglong Zhao}$^{1}$\textsuperscript{$\ast$}, \textbf{Bingbing Ni}$^{1}$, \textbf{Jian Zhang}$^{1}$, \textbf{Haoran Deng}$^{1}$ \\ $^{1}$Shanghai Jiao Tong University, Shanghai, China \\
yunxiang.zhang.polytechnique@gmail.com \\
cl-zhao@sjtu.edu.cn}
\begin{document}

\maketitle


\begin{abstract}
To address the limitations of existing magnitude-based pruning algorithms in cases where model weights or activations are of large and similar magnitude, we propose a novel perspective to discover parameter redundancy among channels and accelerate deep CNNs via channel pruning. Precisely, we argue that channels revealing similar feature information have functional overlap and that most channels within each such similarity group can be removed without compromising model's representational power. After deriving an effective metric for evaluating channel similarity through probabilistic modeling, we introduce a pruning algorithm via hierarchical clustering of channels. In particular, the proposed algorithm does not rely on sparsity training techniques or complex data-driven optimization and can be directly applied to pre-trained models. Extensive experiments on benchmark datasets strongly demonstrate the superior acceleration performance of our approach over prior arts. On ImageNet, our pruned ResNet-50 with 30\% FLOPs reduced outperforms the baseline model.
\end{abstract}


\section{Introduction}

Ever since the renaissance of convolutional neural networks (CNNs) a decade ago, the general trend has been pushing them deeper and deeper to achieve better performance~\cite{alexnet,vgg,resnet}, along with a drastic increase in computation and storage requirements for them. However, it is no easy thing to effectively deploy these unprecedentedly large models in practical applications. Additionally, recent works~\cite{denil2013predicting,ba2014do} have been making further efforts to confirm the over-parameterization problem lying in deep CNNs, revealing that these powerful models do not necessarily have to be so cumbersome.

To bridge the gap between limited computational resources and superior performance of deep CNNs, various model acceleration algorithms have been proposed, including network pruning~\cite{liu2017learning,he2017channel}, low-rank factorization~\cite{jaderberg2014speeding,denton2014exploiting}, network quantization~\cite{courbariaux2016binarized,rastegari2016xnor}, and knowledge distillation~\cite{hinton2015distilling,romero2014fitnets}, among which network pruning is of active research interests.

Despite their favorable performance in certain cases, current magnitude-based approaches to network pruning have some inherent limitations and cannot guarantee stable behaviors in general situations. As pointed out in~\cite{he2019pruning}, magnitude-based methods rely on two preconditions to achieve satisfactory performance, i.e. large magnitude deviation and small minimum magnitude. Without the guarantee of the two prerequisites, these methods are prone to mistakenly removing some weights that are crucial to network's performance. Unfortunately, these requirements are not always met, and we empirically observe that the performance of magnitude-based pruning algorithms deteriorates rapidly under those adverse circumstances.    

In order to address this important issue and derive a more general approach to accelerating deep CNNs, we propose to discover parameter redundancy among feature channels from a novel perspective. To be precise, we argue that channels revealing similar feature representations have functional overlap and that most channels within each such similarity group can be discarded without compromising network's representational power. The proposed similarity-based approach is more general in that it remains applicable beyond the restricted scenarios of its magnitude-based counterpart, as will be demonstrated through our experiments. Figure~\ref{fig:motivation} shows a graphical illustration of our motivation.

Our contributions can be summarized as follows:
\vspace{-2mm}
\begin{itemize}[leftmargin=7mm]
\item Propose a novel perspective, i.e. similarity-based approach, for channel-level pruning. 
\item Introduce an effective metric for evaluating channel similarity via probabilistic modelling.
\item Develop an efficient channel pruning algorithm via hierarchical clustering based on this metric.
\end{itemize}


\section{Related work}

\textbf{Non-structured pruning.}
Earlier works on network pruning~\cite{lecun1990optimal,hassibi1993second} remove redundant parameters by analyzing the Hessian matrix of loss function. Several magnitude-based methods~\cite{han2015learning,han2015deep,guo2016dynamic} drop network weights with insignificant values to perform model compression.~\cite{dai2018compressing,molchanov2017} leverage Bayesian theory to achieve more interpretable pruning. Similar to our approach,~\cite{srinivas2015data,mariet2015diversity} also exploit the notion of similarity for reducing parameter redundancy. However, they only consider fully-connected layers and obtain limited acceleration.~\cite{son2018clustering} introduces k-means clustering to extract the centroids of kernels and achieves computation reduction via kernel sharing.

\textbf{Structured Pruning.} 
To avoid the requirement of specific hardwares and libraries for sparse matrix operations, various channel-level pruning algorithms were proposed.~\cite{wen2016learning,lebedev2016fast,huang2018data,zhou2016less} impose carefully designed sparsity constraints on network weights during training to facilitate the removal of redundant channels. These algorithms rely on training networks from scratch with sparsity and cannot be directly applied to accelerating pre-trained ones.~\cite{he2017channel,luo2017thinet,zhuang2018discrimination} transform network pruning into an optimization problem. Analogous to our work,~\cite{roychowdhury2017reducing} investigates the presence of duplicate filters in neural networks and finds that some of them can be reduced without impairing network's performance.

\textbf{Neural Architecture Search.} 
While compact CNN models~\cite{zhang2018shufflenet, howard2017mobilenets} were mostly designed in hand-crafted manner, neural architecture search has also shown its potential in discovering efficient models. Some methods leverage reinforcement learning~\cite{baker2017designing,barret2017neural} or evolutionary algorithms~\cite{real2017large,liu2018hierarchical} to conduct architecture search in discrete spaces, the others~\cite{luo2018neural,liu2018darts} perform optimization in continuous ones.


\begin{figure}[t]
\centering
\includegraphics[width=0.9\linewidth]{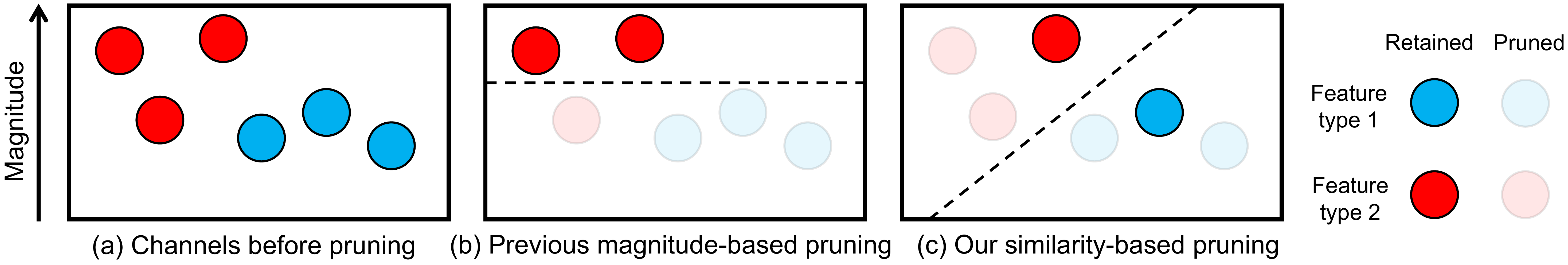}
\caption{Magnitude-based approach vs similarity-based approach to channel pruning.}
\label{fig:motivation}
\end{figure}


\section{Network pruning via channel similarity}
\label{sec:method}




\subsection{Channel similarity}
\label{sec:metric}

Let us denote the activations of an arbitrary convolutional layer by $\mathcal{A} = \{\mathcal{A}^{(c)} \, | \, c \in \{1, ..., C\}\} \in \mathbb{R}^{C \times H \times W \times B}$, where $H, W, C, B$ represent the height \& width of feature maps, the number of feature channels, and the batch size, respectively. Each channel can thus be represented by a 3-dimensional tensor $\mathcal{A}^{(c)} \in \mathbb{R}^{H \times W \times B}$, with $c$ its corresponding channel index.

Intuitively, if we want to quantify the similarity between two feature channels, \textit{mean squared error} (MSE) provides a simple yet effective metric, which is commonly adopted to evaluate to what extent a tensor deviates from another one. Therefore, using previously introduced notations, the distance, or dissimilarity, between two feature channels $\mathcal{A}^{(i)}$ and $\mathcal{A}^{(j)}$ can be defined as follows:
\begin{equation}
\label{eq:distance}
Dist( \, \mathcal{A}^{(i)}, \mathcal{A}^{(j)}) = (H \times W \times B)^{-1} \times \lVert \mathcal{A}^{(i)} - \mathcal{A}^{(j)}\rVert_{2}^{2}
\end{equation}
However, naively computing channel distance in this manner has two obvious limitations. First, this definition is not consistent across different data batches, as the activation values depend on the input samples fed to the network. Second, computing the distance between all possible pairs of feature channels via this formulation is computationally inefficient in view of its $\mathcal{O}(H \times W \times B \times C^{2})$ time complexity, not to speak of the fact that modern CNNs generally contain millions of activations.

To address these issues, we propose a probabilistic approach to estimating channel distance. Precisely, each activation is represented by a random variable $\mathcal{A}_{h, w, b}^{(c)} \sim \mathcal{D}_{h, w, b}^{(c)}( \, \mu_{h, w, b}^{(c)}, (\sigma_{h, w, b}^{(c)})^{2})$ with mean $\mu_{h, w, b}^{(c)}$ and variance $(\sigma_{h, w, b}^{(c)})^{2}$, where $h, w, b$ index the height \& width of feature maps and the position in current batch, respectively.

\begin{prop}
\label{prop1}
Assume that the activations belonging to the same channel are i.i.d. and that any two activations from two different channels are mutually independent. For any two channels in the same convolutional layer, the distance between them converges in probability to the following value: 
\begin{equation}
\frac{1}{n} \lVert \mathcal{A}^{(i)} - \mathcal{A}^{(j)} \rVert_{2}^{2} \xrightarrow[n \to \infty]{prob} ( \, \mu^{(i)} - \mu^{(j)})^{2} + (\sigma^{(i)})^{2} + (\sigma^{(j)})^{2} \quad \text{where} \, n = H \times W \times B
\end{equation}
\end{prop}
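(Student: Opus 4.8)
The plan is to reduce the statement to the weak law of large numbers applied to the sequence of coordinatewise squared differences between the two channels. First I would expand the normalised squared norm as a sample mean,
\[
\frac{1}{n}\lVert \mathcal{A}^{(i)} - \mathcal{A}^{(j)} \rVert_{2}^{2} \;=\; \frac{1}{n}\sum_{k=1}^{n} X_{k}, \qquad X_{k} := \bigl(\mathcal{A}^{(i)}_{k} - \mathcal{A}^{(j)}_{k}\bigr)^{2},
\]
where $k$ enumerates the $n = H \times W \times B$ spatial-and-batch positions. The assumption that activations within a channel are i.i.d., together with the assumption that activations from different channels are mutually independent, implies that the pairs $\bigl(\mathcal{A}^{(i)}_{k}, \mathcal{A}^{(j)}_{k}\bigr)$ are i.i.d.\ in $k$, hence so are the scalars $X_{k}$.

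Next I would identify the limiting constant by evaluating $\mathbb{E}[X_{1}]$. Writing $Y := \mathcal{A}^{(i)}_{1} - \mathcal{A}^{(j)}_{1}$, linearity of expectation gives $\mathbb{E}[Y] = \mu^{(i)} - \mu^{(j)}$, and independence of the two activations gives $\mathrm{Var}(Y) = (\sigma^{(i)})^{2} + (\sigma^{(j)})^{2}$; therefore
\[
\mathbb{E}[X_{1}] = \mathbb{E}[Y^{2}] = \mathrm{Var}(Y) + \bigl(\mathbb{E}[Y]\bigr)^{2} = (\mu^{(i)} - \mu^{(j)})^{2} + (\sigma^{(i)})^{2} + (\sigma^{(j)})^{2},
\]
which is exactly the asserted limit. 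Since the activations have finite means and variances, $\mathbb{E}\lvert X_{1}\rvert \le 2\,\mathbb{E}[(\mathcal{A}^{(i)}_{1})^{2}] + 2\,\mathbb{E}[(\mathcal{A}^{(j)}_{1})^{2}] < \infty$, so Khinchin's weak law of large numbers applies to the i.i.d.\ sequence $(X_{k})$ and yields $\frac{1}{n}\sum_{k=1}^{n} X_{k} \xrightarrow[n \to \infty]{prob} \mathbb{E}[X_{1}]$, which finishes the proof.

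I do not expect a genuine obstacle: the argument is essentially bookkeeping. The two points that require care are (i) spelling out that the modelling hypotheses really do make $(X_{k})_{k}$ an i.i.d.\ sequence --- this is the only place the two assumptions enter, and one should read ``i.i.d.\ within a channel'' as full mutual independence so that the joint law across positions factorises as needed --- and (ii) picking a form of the law of large numbers whose hypotheses are actually available: Khinchin's form needs only a finite first moment of $X_{1}$ (equivalently, finite second moments of the activations, which are granted), whereas the Chebyshev route would additionally need finite fourth moments of the activations. If one preferred the elementary Chebyshev argument, one would add a bounded-fourth-moment hypothesis and simply note $\mathrm{Var}\bigl(\tfrac{1}{n}\sum_{k} X_{k}\bigr) = \tfrac{1}{n}\mathrm{Var}(X_{1}) \to 0$ as $n \to \infty$.
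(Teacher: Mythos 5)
Your proposal is correct and follows essentially the same route as the paper's own proof: rewrite $\frac{1}{n}\lVert \mathcal{A}^{(i)}-\mathcal{A}^{(j)}\rVert_{2}^{2}$ as the sample mean of the i.i.d.\ squared differences, apply the weak law of large numbers, and identify the limit via the mean and variance of the independent difference. Your added remark on which form of the LLN (Khinchin vs.\ Chebyshev) is needed is a minor refinement the paper leaves implicit, but the argument is the same.
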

\begin{proof}
Detailed derivation can be found in Appendix B of the supplementary material.
\end{proof}

The proposition above provides a nice approximation to the distance between two feature channels under our probabilistic settings, since $n$ generally reaches up to tens of thousands in modern CNNs. While the assumption of mutual independence may appear rather strong at first glance, we empirically validate its reasonability in Section~\ref{sec:estimation} by showing that the proposed probabilistic approach induces no bias to channel distance estimation compared to its naive activation-value-based counterpart.

Now if we reexamine the time complexity of this novel formulation for computing channel distance, it drops down to $\mathcal{O}(C^{2})$. In real-life applications, for a batch of 64 normal images of size $2000 \times 1500$ (e.g. smartphone), the saving in computing time can be considerable (up to $10^{8}$ for lower layers). In short, this probabilistic approach is scalable to all kinds of application scenarios given the fact that the number of channels $C$ rarely exceeds 1000 in existing CNN architectures. Now that we have instantiated the notion of \textit{channel similarity} via the proposed \textit{channel distance},\footnote{If the distance between two channels $\mathcal{A}^{(i)}$ and $\mathcal{A}^{(j)}$ is smaller than that between $\mathcal{A}^{(i)}$ and another channel $\mathcal{A}^{(k)}$, then $\mathcal{A}^{(i)}$ is considered more similar to $\mathcal{A}^{(j)}$ than to $\mathcal{A}^{(k)}$.} the next step is to find an efficient way to derive the statistical information of the activations in each feature channel.


\subsection{Channel similarity via batch normalization}
\label{sec:bn}

\textit{Batch normalization} (BN)~\cite{batchnorm} has been introduced to enable faster and more stable training of deep CNNs and is now becoming an indispensable component in deep learning. The composite of three consecutive operations: linear convolution, batch normalization, and \textit{rectified linear unit} (ReLU)~\cite{relu}, is widely adopted as building blocks in state-of-the-art deep CNN architectures~\cite{resnet, densenet}.

The way BN normalizes the activations within a feature channel motivates us to base our probabilistic settings on the statistical information in the outputs of BN layers. In particular, BN normalizes the activations using mini-batch statistics, which perfectly matches our definition of channel distance.

Let us denote the $c$-th input and output feature maps of a BN layer by $x$ and $y$, respectively. For an arbitrary mini-batch $\mathcal{B}$, this BN layer performs the following transformation:
\begin{equation}
y_{h, w, b} = {\rm BN}_{\gamma, \beta}(x_{h, w, b}) = \frac{\gamma}{\sqrt{\sigma_{\mathcal{B}}^{2} + \epsilon}} \, (x_{h, w, b} - \mu_{\mathcal{B}}) + \beta
\end{equation}
where $\mu_{\mathcal{B}}$ and $\sigma_{\mathcal{B}}^{2}$ denote the mini-batch mean and variance, $\gamma$ and $\beta$ are two trainable parameters of an affine transformation that helps to restore the representational power of the network. $\epsilon$ is a small positive number added to the mini-batch variance for numerical stability.  

Given the fact that the overwhelming majority of modern CNN architectures adopt the convention of inserting a BN layer after each convolutional layer, it would be very convenient for us to directly leverage the statistical information provided by BN layers for channel distance estimation. Under the probabilistic settings established in Section~\ref{sec:metric}, the batch-normalized activations of the $c$-th feature channel are i.i.d. random variables with mean $\beta^{(c)}$ and variance $(\gamma^{(c)})^{2}$, i.e. ${\rm BN}_{\gamma^{(c)}, \beta^{(c)}}( \, \mathcal{A}_{h, w, b}^{(c)}) \sim \mathcal{D}^{(c)}( \, \beta^{(c)}, (\gamma^{(c)})^{2}), \hspace{1mm} \forall h, w, b$. Consequently, we can straightforwardly compute the distance between two batch-normalized feature channels in the same convolutional layer as follows:
\begin{equation}
\label{eq:distance-bn}
\begin{array}{c}
Dist( \, \mathcal{N}^{(i)}, \mathcal{N}^{(j)}) \backsimeq ( \, \beta^{(i)} - \beta^{(j)})^{2} + (\gamma^{(i)})^{2} + (\gamma^{(j)})^{2} \\ [1mm]
\text{where} \hspace{1mm} \mathcal{N}^{(i)} = {\rm BN}_{\gamma^{(i)}, \beta^{(i)}}( \, \mathcal{A}^{(i)}) \hspace{1mm} \text{and} \hspace{1mm} \mathcal{N}^{(j)} = {\rm BN}_{\gamma^{(j)}, \beta^{(j)}}( \, \mathcal{A}^{(j)})
\end{array}
\end{equation}


\subsection{From similarity to redundancy}
\label{sec:theory}

Existing magnitude-based pruning algorithms rely on the argument that removing those parameters with relatively insignificant values will incur little impact on the network's performance. However, as pointed out in~\cite{lecun1990optimal, hassibi1993second}, this intuitive idea does not seem to be theoretically well-founded. In contrast, we derive a theoretical support to justify the reasonability of our similarity-based pruning approach. In particular, we show that the removal of an arbitrary feature channel will not impair the network's representational power in a dramatic way, as long as there exists another channel that is sufficiently similar to the removed one and can be exploited as a substitution.

Let us consider two consecutive convolutional layers $\mathcal{A}^{(l)}$ and $\mathcal{A}^{(l+1)}$, we have $\mathcal{A}^{(l)} = \{\mathcal{A}^{(l, c_{l})} \, | \, c_{l} \in \{1, ..., C_{l}\}\} \in \mathbb{R}^{C_{l} \times H_{l} \times W_{l} \times B}$ using similar notations as before. Suppose that batch normalization ${\rm BN}_{\gamma, \beta}$ and non-linear activation $h$ are applied after each linear convolution, then we have:
\begin{equation}
\begin{array}{c}
\mathcal{A}^{(l+1, c_{l+1})} = \sum_{c_{l} \in \{1, ..., C_{l}\}} h(\mathcal{N}^{(l, c_{l})}) \ast \mathcal{W}^{(c_{l}, c_{l+1})} \\ [2mm]
\mathcal{N}^{(l, c_{l})} = {\rm BN}_{\gamma^{(c_{l})}, \beta^{(c_{l})}}(\mathcal{A}^{(l, c_{l})})
\end{array}
\end{equation}
where $\mathcal{W}^{(c_{l}, c_{l+1})} \in \mathbb{R}^{K \times K}$ represents the $c_{l}$-th kernel matrix in the $c_{l+1}$-th convolutional filter and $\ast$ denotes the convolution operation. Note that BN nullifies the effect of bias vectors in convolutional layers, hence they are deprecated in the formulation above.

Inspired by~\cite{srinivas2015data}, we explore and analyze to what extent the activations of $\mathcal{A}^{(l+1)}$ will be shifted if we remove a feature channel from $\mathcal{N}^{(l)}$ and compensate the consequent loss of representational power by exploiting the channels in $\mathcal{N}^{(l)}$ that are similar to the removed one. Suppose that $\mathcal{N}^{(l, i)}$ and $\mathcal{N}^{(l, j)}$ are two similar feature channels in $\mathcal{N}^{(l)}$, now if we remove the former and properly update the kernel matrix corresponding to the latter in an attempt to minimize the resulting performance decay, then for each feature channel $\mathcal{A}_{p}^{(l+1, c_{l+1})}$ after pruning, we have:
\begin{equation}
\mathcal{A}_{p}^{(l+1, c_{l+1})} = h(\mathcal{N}^{(l, j)}) \ast (\mathcal{W}^{(i, c_{l+1})} + \mathcal{W}^{(j, c_{l+1})}) + \sum_{c_{l} \neq i, j} h(\mathcal{N}^{(l, c_{l})}) \ast \mathcal{W}^{(c_{l}, c_{l+1})}
\end{equation}
Note that we replace the kernel matrix $\mathcal{W}^{(j, c_{l+1})}$ by $\mathcal{W}^{(i, c_{l+1})} + \mathcal{W}^{(j, c_{l+1})}$, which is a simple and intuitive way to compensate the loss of representational power resulted by pruning $\mathcal{N}^{(l, i)}$. Computing the distance between $\mathcal{A}^{(l+1, c_{l+1})}$ and $\mathcal{A}_{p}^{(l+1, c_{l+1})}$ using Equation~\ref{eq:distance} gives:
\begin{equation}
\label{eq:shift}
Dist( \, \mathcal{A}^{(l+1, c_{l+1})}, \mathcal{A}_{p}^{(l+1, c_{l+1})}) = \frac{1}{n_{l+1}} \lVert (h(\mathcal{N}^{(l, i)}) - h(\mathcal{N}^{(l, j)})) \ast \mathcal{W}^{(i, c_{l+1})} \rVert_{2}^{2}
\end{equation}
where $n_{l+1} = H_{l+1} \times W_{l+1} \times B$.

\begin{prop}
\label{prop2}
For each feature channel $\mathcal{A}^{(l+1, c_{l+1})}$ in the $(l+1)$-th convolutional layer, the distance shift caused by removing the feature channel $\mathcal{N}^{(l, i)}$ from the $l$-th convolutional layer, as defined in Equation~\ref{eq:shift}, admits the following upper bound:
\begin{equation}
Dist( \, \mathcal{A}^{(l+1, c_{l+1})}, \mathcal{A}_{p}^{(l+1, c_{l+1})}) \le \lambda \times \min_{j \in \{1, ..., C_{l}\}} Dist( \, \mathcal{N}^{(l, i)}, \mathcal{N}^{(l, j)})
\end{equation}
where $\lambda = \frac{n_{l}}{n_{l+1}} K^{2} \lVert \mathcal{W}^{(i, c_{l+1})} \rVert_{2}^{2}$ and $K^{2}$ corresponds to the size of each kernel matrix $\mathcal{W}^{(c_{l}, c_{l+1})}$.
\end{prop}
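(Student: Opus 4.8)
The plan is to chain three elementary inequalities: an $\ell_2$ operator bound for convolution against a $K \times K$ kernel, the fact that the non-linearity $h$ is $1$-Lipschitz, and the definition of channel distance (Equation~\ref{eq:distance}) applied at layer $l$. Starting from Equation~\ref{eq:shift}, the first step is to bound $\lVert f \ast \mathcal{W} \rVert_{2}^{2}$ for $f = h(\mathcal{N}^{(l,i)}) - h(\mathcal{N}^{(l,j)})$ and $\mathcal{W} = \mathcal{W}^{(i, c_{l+1})}$. Each entry of $f \ast \mathcal{W}$ is the inner product of $\mathcal{W}$ with a $K \times K$ patch of $f$, so Cauchy--Schwarz gives that its square is at most $\lVert \mathcal{W} \rVert_{2}^{2}$ times the sum of squared entries of that patch; summing over all output locations and noting that every entry of $f$ lies in at most $K^{2}$ such patches yields $\lVert f \ast \mathcal{W} \rVert_{2}^{2} \le K^{2} \lVert \mathcal{W} \rVert_{2}^{2} \, \lVert f \rVert_{2}^{2}$. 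Hence
\begin{equation}
Dist(\mathcal{A}^{(l+1, c_{l+1})}, \mathcal{A}_{p}^{(l+1, c_{l+1})}) \le \frac{K^{2}}{n_{l+1}} \lVert \mathcal{W}^{(i, c_{l+1})} \rVert_{2}^{2} \, \lVert h(\mathcal{N}^{(l,i)}) - h(\mathcal{N}^{(l,j)}) \rVert_{2}^{2} .
\end{equation}

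The second step uses that $h$ (ReLU, or any $1$-Lipschitz activation) satisfies $|h(x) - h(y)| \le |x - y|$ pointwise, so $\lVert h(\mathcal{N}^{(l,i)}) - h(\mathcal{N}^{(l,j)}) \rVert_{2}^{2} \le \lVert \mathcal{N}^{(l,i)} - \mathcal{N}^{(l,j)} \rVert_{2}^{2}$, and then Equation~\ref{eq:distance} at layer $l$, namely $\lVert \mathcal{N}^{(l,i)} - \mathcal{N}^{(l,j)} \rVert_{2}^{2} = n_{l} \, Dist(\mathcal{N}^{(l,i)}, \mathcal{N}^{(l,j)})$ with $n_{l} = H_{l} \times W_{l} \times B$. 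Substituting these in collapses the prefactor precisely to $\lambda = \frac{n_{l}}{n_{l+1}} K^{2} \lVert \mathcal{W}^{(i, c_{l+1})} \rVert_{2}^{2}$, giving $Dist(\mathcal{A}^{(l+1, c_{l+1})}, \mathcal{A}_{p}^{(l+1, c_{l+1})}) \le \lambda \, Dist(\mathcal{N}^{(l,i)}, \mathcal{N}^{(l,j)})$. Since $j$ was an arbitrary channel into which $\mathcal{N}^{(l,i)}$ could be merged and $\lambda$ does not depend on $j$, taking the minimum over $j \in \{1, \dots, C_{l}\}$ on the right-hand side completes the argument.

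The Cauchy--Schwarz and Lipschitz steps are routine; the one point that needs care is the combinatorial claim inside the convolution bound --- that each input location feeds at most $K^{2}$ output locations, which is what keeps the constant equal to $K^{2}$ and absorbs the padding/stride conventions uniformly (so that the ratio $n_{l}/n_{l+1}$ is all that survives in $\lambda$). I expect this to be the main, albeit mild, obstacle; everything else is direct substitution.
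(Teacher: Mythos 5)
Your proposal is correct and follows essentially the same route as the paper's own proof: convolution entries rewritten as inner products of the kernel with $K \times K$ patches, Cauchy--Schwarz on each entry, the counting argument that each input activation appears in at most $K^{2}$ patches, the $1$-Lipschitz property of the activation to drop $h$, and finally the minimum over $j$ since $\lambda$ is independent of $j$. No gaps; the one step you flagged as needing care (the $K^{2}$ patch-multiplicity bound) is exactly the step the paper also singles out and justifies the same way.
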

\begin{proof}
Detailed derivation can be found in Appendix B of the supplementary material.
\end{proof}

Through the conclusion of Proposition~\ref{prop2}, we can notice that $\lambda$ depends on the size of feature channels, the size and $L_{2}$ norm of the kernel matrix. In practice, the coefficient $\lambda$ is typically a small value of magnitude $\sim 10^{-2}$, which means that the removal of a feature channel results in rather limited shift on the next layer's activations and hence has little impact on the network's representational power, as long as there exists adequately similar channels to replace its function. Note that the above-mentioned one-to-one substitution strategy is mainly introduced for theoretical concern. In practical applications, it is clearly a sub-optimal solution from an overall perspective. For instance, we could have exploited more channels to substitute for the pruned one, i.e. a linear combination. In our implementation, the retained kernel matrices after pruning are adaptively updated by gradient-descent-based optimization algorithms rather than manually computed in a fixed way, as this automatic approach normally returns a superior updating strategy. Experimental results well reflect the effectiveness of this choice.


\subsection{Similarity-based pruning via hierarchical clustering}
\label{sec:algorithm}

Now that we have properly defined our metric of channel similarity and have theoretically demonstrated the feasibility of our similarity-based pruning approach, it is very natural to resort to clustering algorithms for the subsequent pruning process. Precisely, we want to group the channels within each convolutional layer into similarity clusters based on its channel distance matrix.\footnote{A square matrix that groups the distance between all possible pairs of channels within a convolutional layer.} Given the results of clustering, we only need to retain one representative channel for each cluster, as the representation information provided by the others is highly similar and hence redundant. Here we select the channel with the largest $|\gamma|$ to enable faster and easier fine-tuning process after the pruning operation.  

To this end, \textit{hierarchical clustering} (HC) is introduced here as our agglomerative method. Compared with other popular clustering algorithms, HC is more adaptive to our demand as it requires only one hyper-parameter, i.e. the threshold distance. In particular, this threshold allows us to simultaneously control the clustering result and hence the pruning ratio of all layers with a single global parameter. This property is crucial to discovering efficient and compact CNN models, as the target architecture is automatically determined by the pruning algorithm~\cite{liu2018rethinking}. In order to render the distance values comparable across all layers and make our one-shot pruning more stable, we further normalize the values of each distance matrix to $[0, 1]$ before feeding them to the clustering algorithm. Empirical results in Section~\ref{sec:normalization} showcase the importance of this step. 

The overall workflow of our similarity-based pruning algorithm can be summarized as follows:
\vspace{-2mm}
\begin{itemize}
    \item Construct a channel distance matrix $\mathcal{D}^{(l)}$ for each convolutional layer using Equation~\ref{eq:distance-bn}. \item Normalize each channel distance matrix: $\mathcal{D}^{(l)} = (\mathcal{D}^{(l)} - \mathcal{D}_{min}^{(l)}) \times (\mathcal{D}_{max}^{(l)} - \mathcal{D}_{min}^{(l)})^{-1}$.
    \item Perform hierarchical clustering of channels with global threshold $t$ on each convolutional layer using its normalized channel distance matrix obtained from step 2. 
    \item For each cluster, retain the channel with the largest $|\gamma|$ and remove the others.
    \item Fine-tune the resulting pruned model for a few epochs to restore performance.
\end{itemize}
\vspace{-2mm}
Corresponding pseudo code can be found in Appendix C of the supplementary material.


\begin{table*}[t!]
\caption{Acceleration results on CIFAR and ImageNet datasets. ``Base'' represents the uncompressed baseline model obtained via normal training, ``$t$'' denotes the threshold used for controlling pruning ratio, and ``--'' means that the result is meaningless or not available. In ``Automatic'' column, ``\cmark'' and ``\xmark'' indicate whether the target compact architecture is automatically discovered by the pruning algorithm or manually pre-defined. In ``Pre-train'' column, ``\cmark'' and ``\xmark'' indicate whether the algorithm can be directly applied to pruning pre-trained models or not. On CIFAR dataset, we report the accuracy on test set. On ImageNet dataset, single view evaluation of Top-1 and Top-5 \textit{accuracy drop} is reported to accommodate the discrepancy among different deep learning frameworks. The overall FLOPs and wall-clock time in (c) correspond to a data batch of size 256.}
    \vskip 1mm
	\centering 
	(a) Comparison of acceleration performance on CIFAR.
	\vskip 1mm
	\begin{tabular}{p{1.6cm}<{\centering} p{2.5cm}<{\centering} p{1.3cm}<{\centering} p{1.3cm}<{\centering} p{1.3cm}<{\centering} p{1.3cm}<{\centering} p{1.3cm}<{\centering}}
		\toprule 
	    Dataset	& Model & Automatic & Pre-train & Accuracy & FLOPs & Pruned \\
		\midrule 
		\multirow{10}{*}{CIFAR-10}
		& VGG-16 Base & -- & -- & 93.39\% & 627.36M & -- \\
        \specialrule{0em}{1pt}{1pt}
		\cline{2-7}
	    \specialrule{0em}{1pt}{1pt}
	    & Ours ($t = 0.25$) & \cmark & \cmark & \textbf{92.71\%} & \textbf{182.31M} & \textbf{70.94\%} \\
	    & NS~\cite{liu2017learning} & \cmark & \xmark & 92.33\% & 182.47M & 70.92\% \\
        & SSL~\cite{wen2016learning} & \cmark & \xmark & 91.65\% & 248.94M & 60.32\% \\
        & RDF~\cite{roychowdhury2017reducing} & \cmark & \cmark & 91.84\% & 314.33M & 49.90\% \\
        \specialrule{0em}{1pt}{1pt}
		\cline{2-7}
	    \specialrule{0em}{1pt}{1pt}
	    & ResNet-110 Base & -- & -- & 94.65\% & 341.23M & -- \\
        \specialrule{0em}{1pt}{1pt}
		\cline{2-7}
	    \specialrule{0em}{1pt}{1pt}
	    & Ours ($t = 0.225$) & \cmark & \cmark & \textbf{94.25\%} & \textbf{141.27M} & \textbf{58.60\%} \\
	    & NS~\cite{liu2017learning} & \cmark & \xmark & 93.57\% & 141.77M & 58.45\% \\
        & PFGM~\cite{he2019pruning} & \xmark & \cmark & 93.73\% & 177.38M & 48.02\% \\
        & SFP~\cite{he2018soft} & \xmark & \cmark & 93.38\% & 217.38M & 36.30\% \\
        \midrule
		\multirow{10}{*}{CIFAR-100}
		& VGG-16 Base & -- & -- & 72.10\% & 627.45M & -- \\
        \specialrule{0em}{1pt}{1pt}
		\cline{2-7}
	    \specialrule{0em}{1pt}{1pt}
	    & Ours ($t = 0.20$) & \cmark & \cmark & \textbf{71.22\%} & \textbf{334.16M} & \textbf{46.74\%} \\
	    & NS~\cite{liu2017learning} & \cmark & \xmark & 68.82\% & 441.89M & 29.57\% \\
        & SSL~\cite{wen2016learning} & \cmark & \xmark & 68.80\% & 374.10M & 40.38\% \\
        & RDF~\cite{roychowdhury2017reducing} & \cmark & \cmark & 69.63\% & 388.53M & 38.08\% \\
        \specialrule{0em}{1pt}{1pt}
		\cline{2-7}
	    \specialrule{0em}{1pt}{1pt}
	    & ResNet-110 Base & -- & -- & 75.62\% & 341.28M & -- \\
        \specialrule{0em}{1pt}{1pt}
		\cline{2-7}
	    \specialrule{0em}{1pt}{1pt}
	    & Ours ($t = 0.19$) & \cmark & \cmark & \textbf{74.56\%} & \textbf{153.73M} & \textbf{54.96\%} \\
	    & NS~\cite{liu2017learning} & \cmark & \xmark & 73.67\% & 153.96M & 54.89\% \\
        & SSL~\cite{wen2016learning} & \cmark & \xmark & 71.28\% & 161.95M & 52.55\% \\
        & RDF~\cite{roychowdhury2017reducing} & \cmark & \cmark & 71.52\% & 190.03M & 44.32\% \\
		\bottomrule \\
	\end{tabular}
    
    (b) Comparison of acceleration performance on ImageNet for ResNet-50.
	\vskip 1mm
	\begin{tabular}{p{1.3cm}<{\centering} p{2.2cm}<{\centering} p{1.3cm}<{\centering} p{1.3cm}<{\centering} p{1.1cm}<{\centering} p{1.1cm}<{\centering} p{1.1cm}<{\centering} p{1.1cm}<{\centering}}        
	    \toprule
	    Dataset	& Model & Automatic & Pre-train & Top-1 & Top-5 & FLOPs & Pruned \\ 
		\midrule 
		\multirow{6}{*}{ImageNet}
	    & Ours ($t = 0.10$) & \cmark & \cmark & \textbf{-0.80\%} & \textbf{-0.34\%} & \textbf{4.44B} & \textbf{45.90\%} \\
        & TN~\cite{luo2017thinet} & \xmark & \cmark & -3.26\% & -1.53\% & 5.23B & 36.27\% \\
        & NS~\cite{liu2017learning} & \cmark & \xmark & -2.16\% & -1.17\% & 4.49B & 45.37\% \\
        & CP~\cite{he2017channel} & \xmark & \cmark & -3.25\% & -- & 5.49B & 33.13\% \\
        & PFGM~\cite{he2019pruning} & \xmark & \cmark & -1.32\% & -0.55\% & 4.46B & 45.74\% \\
        & SFP~\cite{he2018soft} & \xmark & \cmark & -1.54\% & -0.81\% & 5.23B & 36.27\% \\
		\bottomrule \\
	\end{tabular}
	
	(c) Wall-clock time saving of pruned ResNet-50 on ImageNet.
	\vskip 1mm
	\begin{tabular}{p{1.3cm}<{\centering} p{2.2cm}<{\centering} p{1.1cm}<{\centering} p{1.1cm}<{\centering} p{1.1cm}<{\centering} p{1.1cm}<{\centering} p{1.1cm}<{\centering} p{1.1cm}<{\centering}}        
	    \toprule
	    Dataset	& Model & Top-1 & Top-5 & FLOPs & Pruned & Time & Pruned \\
		\midrule 
		\multirow{4}{*}{ImageNet}
	    & Ours ($t = 0.06$) & \textbf{0.08\%} & \textbf{0.07\%} & 1472B & 29.97\% & 0.429s & 24.74\% \\
        & Ours ($t = 0.08$) & -0.48\% & -0.19\% & 1262B & 39.99\% & 0.391s & 31.40\% \\
        & Ours ($t = 0.11$) & -1.15\% & -0.57\% & 1052B & 49.99\% & 0.362s & 36.67\% \\
        & Ours ($t = 0.13$) & -2.66\% & -1.43\% & \textbf{839B} & \textbf{60.03\%} & \textbf{0.331s} & \textbf{41.93\%} \\
		\bottomrule \\
	\end{tabular}
\label{tab:performance}
\end{table*}


\section{Experiments and analysis}
\label{sec:experiments}

We empirically demonstrate the effectiveness of our similarity-based channel pruning algorithm on two representative CNN architectures VGGNet~\cite{vgg} and ResNet~\cite{resnet}. Results are reported on benchmark datasets CIFAR~\cite{cifar} and ImageNet~\cite{imagenet}. Several state-of-the-art channel pruning algorithms are introduced for performance comparison, including SSL~\cite{wen2016learning}, RDF~\cite{roychowdhury2017reducing}, PFGM~\cite{he2019pruning}, TN~\cite{luo2017thinet}, CP~\cite{he2017channel}, NS~\cite{liu2017learning}, SFP~\cite{he2018soft}.




\textbf{Implementation details.}
For all experiments on CIFAR-10 and CIFAR-100 datasets, we fine-tune the pruned models using SGD optimizer and batch size 64 for 60 epochs. Learning rate begins at 0.01, decays by 10 at 25, 50 epoch for ResNet and 30, 50 epoch for VGGNet, respectively. On ImageNet dataset, all training settings are kept the same except for a batch size 256. A weight decay of $10^{-4}$ and a Nesterov momentum of 0.9 are utilized. More details can be found in Appendix A.

\textbf{Acceleration metric.}
We adopt FLOPs as the acceleration metric to evaluate all pruning algorithms in our experiments, as it plays a decisive role on the network's inference speed. Different from most previous works, we count all floating point operations that take place during the inference phase when computing overall FLOPs, not only those related to convolution operations, since non-tensor layers (e.g. BN, ReLU and pooling layers) also occupy a considerable part of inference time on GPU~\cite{luo2017thinet}. Additionally, tensor multiply-adds counts for two FLOPs. Reported results of all baseline methods are computed using their publicly available configuration files under our settings. 


\begin{figure}[t!]
\centering
\includegraphics[width=0.9\linewidth]{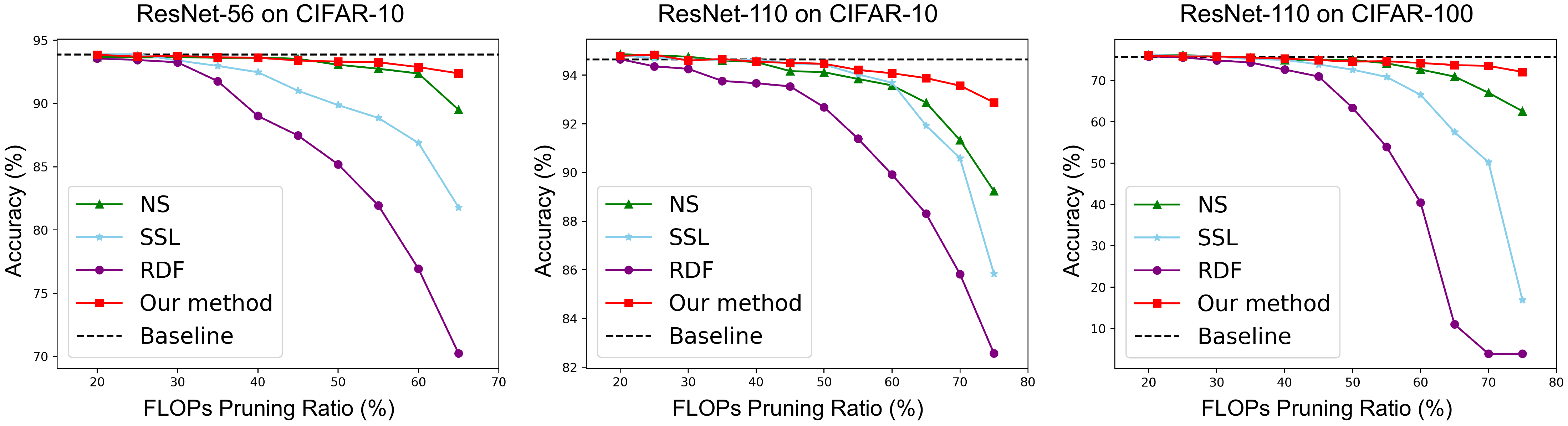}
\caption{Accuracy-acceleration curve on CIFAR dataset. Best viewed in color.}
\label{fig:performance-curve}
\end{figure}


\begin{figure}[t!]
\centering
\includegraphics[width=0.99\linewidth]{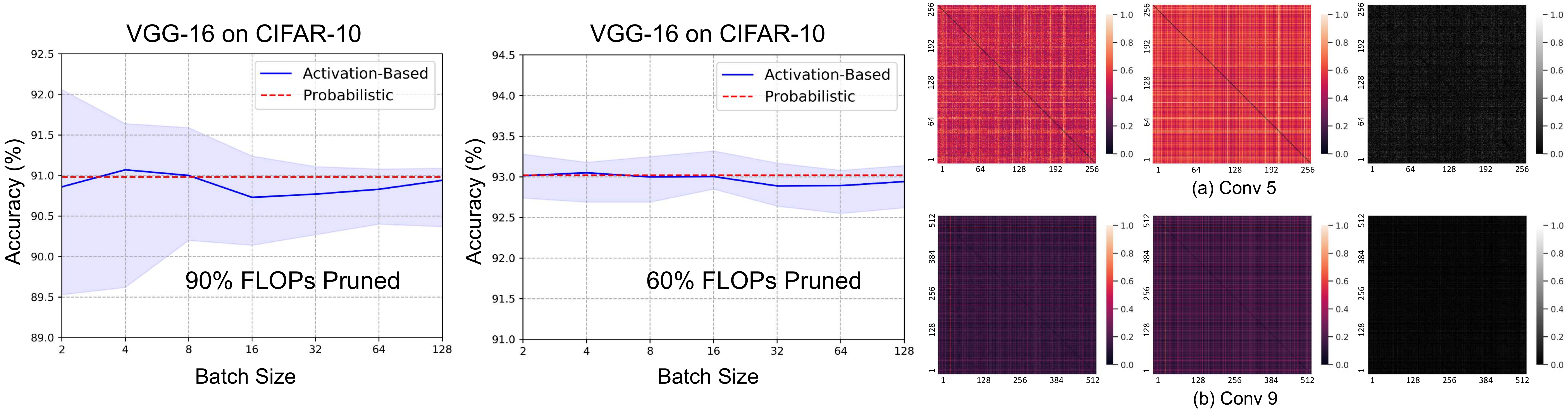}
\caption{Left: Performance comparison between activation-value-based approach and its probabilistic counterpart. For each batch size, 20 trials are conducted using randomly sampled batches. Blue line and shaded region represent the mean accuracy and accuracy range of the former. The accuracy of the latter is added for comparison (dotted red line). Right: Channel distance matrix obtained using Equation~\ref{eq:distance} and averaged over 20 trials of batch size 256 (leftmost), using Equation~\ref{eq:distance-bn} (middle), and their absolute difference (rightmost) for VGG-16 on CIFAR-10. Best viewed in color.}
\label{fig:estimation}
\end{figure}


\subsection{VGGNet and ResNet on CIFAR}

We first evaluate the acceleration performance of our algorithm on CIFAR dataset. The results are summarized in Table~\ref{tab:performance}. Compared with other algorithms, ours consistently achieves better accuracy while reducing more FLOPs. To showcase the superior performance of our approach over the others under different pruning ratios, we further plot the test accuracy in function of the proportion of pruned FLOPs in Figure~\ref{fig:performance-curve}. While the other methods rival ours in accuracy under relatively low pruning ratio, they begin to experience severe performance decay as the ratio goes up. In contrast, our algorithm reveals very stable behaviors in all architecture-dataset settings and maintains decent performance even under extremely aggressive pruning ratio.

\subsection{ResNet on ImageNet}

We then apply our algorithm to pruning ResNet-50 on ImageNet-2012 to validate its effectiveness on large-scale datasets. As shown in Table~\ref{tab:performance}, our algorithm outperforms all 5 competitors by a notable margin in terms of Top-1 and Top-5 accuracy drop, achieving 45.90\% reduction of FLOPs at the cost of only 0.34\% drop of Top-5 accuracy. We further investigate how our algorithm extends to varying pruning ratios and how this theoretical reduction of computation translates to realistic acceleration on modern GPUs. The results presented in Table~\ref{tab:performance}(c) are obtained using PyTorch~\cite{pytorch} and cuDNN v7.1 on a TITAN X Pascal GPU. The gap between FLOPs saving and time saving can be attributed to the fact that non-computing operations, e.g. IO query and buffer switch, also influence inference speed.  


\subsection{Channel distance estimation}
\label{sec:estimation}

As discussed in Section~\ref{sec:metric}, calculating channel distance via activation-value-based approach, i.e. using Equation~\ref{eq:distance}, is data-dependent and inconsistent across different data batches. To elaborate on this point, we study the performance of pruned VGG-16 model obtained using randomly sampled batches. As illustrated in Figure~\ref{fig:estimation}, while activation-based approach matches probabilistic approach in mean accuracy, its performance fluctuates significantly across different input samples and exhibits highly unstable pattern. In addition, this problem of instability gets worse with higher pruning ratio and smaller batch size. In contrast, probabilistic approach does not rely on input data and consistently achieves satisfactory performance. This result strongly confirms the effectiveness and necessity of the proposed probabilistic approach to channel distance estimation.

In order to validate that our probabilistic approach provides a good approximation to the mathematical expectation of its activation-based counterpart and that the assumption of Proposition~\ref{prop1} causes no bias in the whole estimation process, we further compare the distance matrix computed using Equation~\ref{eq:distance} (averaged over 20 trials) and that computed using Equation~\ref{eq:distance-bn}. As we can see from Figure~\ref{fig:estimation}, there only exists minor discrepancy between the mean distance matrix obtained via activation-based approach and the distance matrix estimated via probabilistic approach, which effectively supports our claim. This observation also explains why the two approaches share similar performance in mean accuracy. More visualization results can be found in Appendix D.


\subsection{Normalized channel distance matrix}
\label{sec:normalization}

To explore the effect of normalizing channel distance matrix before performing HC, we visualize the distribution of values within both the normalized matrix and the unnormalized one in Figure~\ref{fig:normalization}. From the visualization results, we observe that, without distance normalization, value distribution is rather lopsided across different layers. This phenomenon results in imbalanced target architecture when all layers are pruned using a global threshold, i.e. some upper layers are boiled down to a single channel. As we can see from Figure~\ref{fig:normalization}, normalization operation effectively alleviates this dilemma, enabling automatic search of target architecture while ensuring efficient propagation of feature information through the network. Comparison of accuracy curve between the two well corroborates this point.

\subsection{Performance analysis}

We further look into the reason for the superior performance of our algorithm over the other automatic pruning algorithms in finding efficient and compact architectures. As shown in Figure~\ref{fig:performance-histogram}, NS and SSL prune very aggressively over upper layers, making it extremely hard to propagate feature information extracted in lower layers up to the classification layer. Moreover, upper layers generate very limited FLOPs as the feature maps there are already down-sampled several times and thus of smaller size, explaining why our algorithm retains more channels while pruning the same amount of FLOPs.  


\begin{figure}[t!]
\centering
\includegraphics[width=0.9\linewidth]{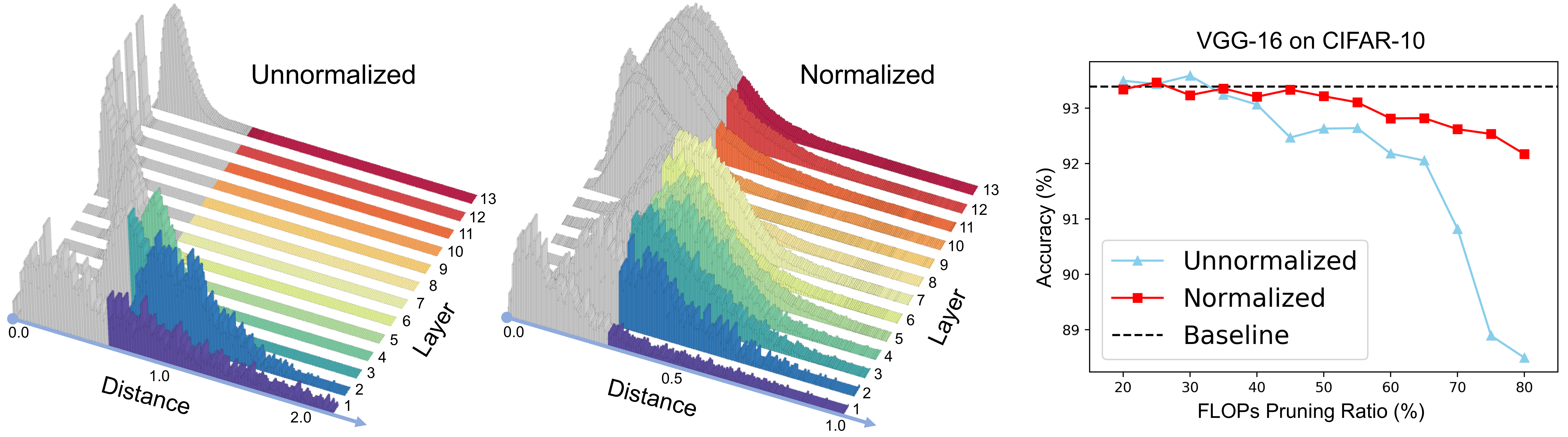}
\caption{Left: Distribution of values in both the unnormalized distance matrix and the normalized one for VGG-16 on CIFAR-10. Gray area shows the distance values below the pruning threshold, i.e. those pairs of channels that should be grouped into clusters. Right: Performance comparison between the two cases on CIFAR-10. Best viewed in color.}
\label{fig:normalization}
\end{figure}


\begin{figure}[t!]
\centering
\includegraphics[width=0.9\linewidth]{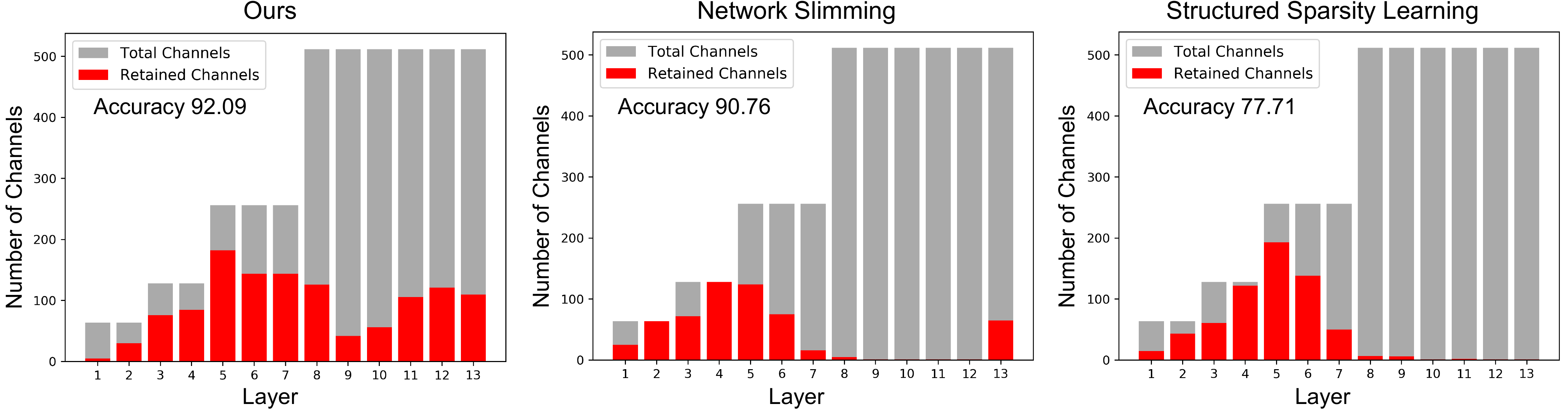}
\caption{Architecture of pruned VGG-16 model with 80\% reduction of FLOPs on CIFAR-10 dataset. Two automatic pruning algorithms, SSL and NS, are introduced for comparison. Best viewed in color.}
\label{fig:performance-histogram}
\end{figure}


\section{Conclusion  }

We propose a novel perspective to perform channel-level pruning and accelerate deep CNNs by showing that channels revealing similar feature information have functional overlap and that most channels within each such similarity group can be removed with minor impact on network's representational power. Experimental results well support the reasonability of our intuition. In the future, we will try extending our approach to more general neural network models, such as RNNs and GNNs.


\printbibliography[heading=bibintoc]


\section*{A. \hspace{1mm} Implementation}

\textbf{Implementation details.} 
On CIFAR dataset, we make use of a variant of VGG-16~\cite{liu2017learning} and a 3-stage pre-activation ResNet~\cite{he2016identity}. On ImageNet dataset, a 4-stage ResNet-50 with bottleneck structure~\cite{resnet} is adopted. Our algorithm is implemented in Pytorch~\cite{pytorch}. For NS, we take its original implementation in Pytorch. For SSL and RDF, we re-implement them in Pytorch by following their original papers. For TN, PFGM, CP and SFP, we directly take their acceleration results from the literature.

\textbf{Pruning details.}
While channel-level pruning is straightforward to realize for single-branch CNN models like AlexNet~\cite{alexnet} and VGGNet, special concerns are required for more sophisticated architectures with cross-layer connections, such as ResNet and DenseNet~\cite{densenet}, to ensure the consistency of forward and backward propagation. To achieve this purpose while not compromising the flexibility of pruning operation over certain layers, we exploit the strategy of inserting a channel selection layer in each residual block, as proposed in~\cite{liu2017learning}.


\section*{B. \hspace{1mm} Proof}

\begin{prop}
\label{prop1copy}
Assume that the activations belonging to the same channel are i.i.d. and that any two activations from two different channels are mutually independent. For any two channels in the same convolutional layer, the distance between them converges in probability to the following value:
\begin{equation}
\label{eq:prop1}
\frac{1}{n} \lVert \mathcal{A}^{(i)} - \mathcal{A}^{(j)} \rVert_{2}^{2} \xrightarrow[n \to \infty]{proba} ( \, \mu^{(i)} - \mu^{(j)})^{2} + (\sigma^{(i)})^{2} + (\sigma^{(j)})^{2} \quad \text{where} \, n = H \times W \times B
\end{equation}
\end{prop}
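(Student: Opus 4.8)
The plan is to reduce the statement to a textbook application of the weak law of large numbers. First I would expand the squared Euclidean norm position-wise, writing
\[
\frac{1}{n} \lVert \mathcal{A}^{(i)} - \mathcal{A}^{(j)} \rVert_{2}^{2} = \frac{1}{n} \sum_{k=1}^{n} Z_{k}, \qquad Z_{k} := \big( \mathcal{A}^{(i)}_{k} - \mathcal{A}^{(j)}_{k} \big)^{2},
\]
where $k$ ranges over the $n = H \times W \times B$ index triples $(h,w,b)$. Under the stated hypotheses — the activations within a channel are i.i.d.\ and any two activations from different channels are mutually independent — the pairs $(\mathcal{A}^{(i)}_{k}, \mathcal{A}^{(j)}_{k})$ are i.i.d., and since each $Z_{k}$ is the same fixed measurable function of such a pair, the sequence $\{Z_{k}\}_{k=1}^{n}$ is i.i.d.\ as well.

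Next I would identify the common mean $\mathbb{E}[Z_{1}]$ with the claimed limit. Writing $X = \mathcal{A}^{(i)}_{1}$ and $Y = \mathcal{A}^{(j)}_{1}$, which are independent with means $\mu^{(i)}, \mu^{(j)}$ and variances $(\sigma^{(i)})^{2}, (\sigma^{(j)})^{2}$, the bias--variance decomposition gives
\[
\mathbb{E}[(X - Y)^{2}] = \big( \mathbb{E}[X - Y] \big)^{2} + \mathrm{Var}(X - Y) = \big( \mu^{(i)} - \mu^{(j)} \big)^{2} + (\sigma^{(i)})^{2} + (\sigma^{(j)})^{2},
\]
where the cross term $\mathrm{Cov}(X, Y)$ vanishes by independence. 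This is precisely the right-hand side of the proposition.

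Finally I would invoke the weak law of large numbers for the i.i.d.\ sequence $\{Z_{k}\}$: because $\mathbb{E}|Z_{1}| = \mathbb{E}[(X-Y)^{2}] < \infty$ — finite since $X$ and $Y$ have finite second moments, which is implied by the assumed finiteness of their means and variances — Khinchin's theorem gives $\frac{1}{n}\sum_{k=1}^{n} Z_{k} \to \mathbb{E}[Z_{1}]$ in probability as $n \to \infty$, which is the assertion. If instead one wants a self-contained estimate via Chebyshev's inequality, one additionally assumes finite fourth moments for the activations so that $\mathrm{Var}(Z_{1}) < \infty$, and then $\mathbb{P}\big( | \tfrac{1}{n}\sum_{k} Z_{k} - \mathbb{E}[Z_{1}] | \ge \varepsilon \big) \le \mathrm{Var}(Z_{1})/(n \varepsilon^{2}) \to 0$. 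The only genuine subtlety, and hence the ``hard part'', is bookkeeping the moment hypotheses: the proposition only posits that each activation has a well-defined mean and variance, so one must make explicit that this already guarantees $\mathbb{E}[Z_{1}] < \infty$ for the general WLLN, and flag the extra fourth-moment requirement if the elementary Chebyshev route is preferred. The remaining steps are routine algebra.
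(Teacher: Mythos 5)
Your proposal is correct and follows essentially the same route as the paper's own proof: both rewrite $\frac{1}{n}\lVert \mathcal{A}^{(i)}-\mathcal{A}^{(j)}\rVert_2^2$ as an average of the i.i.d.\ squared differences, apply the weak law of large numbers, and identify the limit with $(\mu^{(i)}-\mu^{(j)})^2+(\sigma^{(i)})^2+(\sigma^{(j)})^2$ via independence (your bias--variance decomposition versus the paper's direct expansion of the second moment is only a cosmetic difference). Your explicit bookkeeping of the moment hypotheses for Khinchin's theorem is a minor refinement the paper leaves implicit, but it does not change the argument.
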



\begin{proof}
The proof of this proposition is a direct application of the weak law of large numbers.
Since the activations within the same feature channel are i.i.d., we can denote $\mathcal{A}_{h, w, b}^{(c)} \sim \mathcal{D}^{(c)}( \, \mu^{(c)}, (\sigma^{(c)})^{2})$. In addition, if we define that $\mathcal{Z}_{h, w, b}^{(i, j)} = ( \, \mathcal{A}_{h, w, b}^{(i)} - \mathcal{A}_{h, w, b}^{(j)} \, )^{2} \sim \mathcal{D}^{(i, j)}$, $\hat{\mathcal{A}}^{(i)} \sim \mathcal{D}^{(i)}( \, \mu^{(i)}, (\sigma^{(i)})^{2})$ and $\hat{\mathcal{A}}^{(j)} \sim \mathcal{D}^{(j)}( \, \mu^{(j)}, (\sigma^{(j)})^{2})$ are mutually independent, and $\hat{\mathcal{Z}}^{(i, j)} = ( \, \hat{\mathcal{A}}^{(i)} - \hat{\mathcal{A}}^{(j)} \, )^{2}$, then we have:
\begin{equation}
\label{eq:lln}
\begin{array}{rl}
& P \Big\{ \Big| \frac{1}{n} \lVert \mathcal{A}^{(i)} - \mathcal{A}^{(j)} \rVert_{2}^{2} - \mathbb{E}[(\hat{\mathcal{A}}^{(i)} - \hat{\mathcal{A}}^{(j)})^{2}] \Big| \le \varepsilon \Big\} \\ [3mm]
= & P \Big\{ \Big| \frac{1}{n} \sum_{h, w, b} (\mathcal{A}_{h, w, b}^{(i)} - \mathcal{A}_{h, w, b}^{(j)})^{2} - \mathbb{E}[(\hat{\mathcal{A}}^{(i)} - \hat{\mathcal{A}}^{(j)})^{2}] \Big| \le \varepsilon \Big\} \\ [3mm]
= & P \Big\{ \Big| \frac{1}{n} \sum_{h, w, b} \mathcal{Z}_{h, w, b}^{(i, j)} - \mathbb{E}[\hat{\mathcal{Z}}^{(i, j)}] \Big| \le \varepsilon \Big\}
\end{array}
\end{equation}
The sequence of random variables $\mathcal{Z}_{h, w, b}^{(i, j)}$ are i.i.d. and $\mathbb{E}[\hat{\mathcal{Z}}^{(i, j)}] = \mathbb{E}[\mathcal{Z}_{h, w, b}^{(i, j)}] \hspace{0.2cm} \forall h, w, b$. We can thus apply the weak law of large numbers to Equation~\ref{eq:lln}:
\begin{equation}
\begin{array}{rcl}
\lim_{n \to \infty} P \Big\{ \Big| \frac{1}{n} \sum_{h, w, b} \mathcal{Z}_{h, w, b}^{i, j} - \mathbb{E}[\hat{\mathcal{Z}}^{(i, j)}] \Big| \le \varepsilon \Big\} & = & 1
\end{array}
\end{equation}
Note that $\hat{\mathcal{A}}^{(i)}$ and $\hat{\mathcal{A}}^{(j)}$ are mutually independent, therefore the r.h.s. of Equation~\ref{eq:prop1} can be derived by using the first two moments (mean and variance) of $\hat{\mathcal{A}}^{(i)}$ and $\hat{\mathcal{A}}^{(j)}$ as follows:
\begin{equation}
\begin{array}{rcl}
\mathbb{E}[(\hat{\mathcal{A}}^{(i)} - \hat{\mathcal{A}}^{(j)})^{2}] & = & \mathbb{E}[(\hat{\mathcal{A}}^{(i)})^{2}] + \mathbb{E}[(\hat{\mathcal{A}}^{(j)})^{2}] - \mathbb{E}[2 \, \hat{\mathcal{A}}^{(i)} \, \hat{\mathcal{A}}^{(j)}] \\ [3mm]
& = & \mathbb{V}ar[\hat{\mathcal{A}}^{(i)}] + \mathbb{E}[\hat{\mathcal{A}}^{(i)}]^{2} + \mathbb{V}ar[\hat{\mathcal{A}}^{(j)}] + \mathbb{E}[\hat{\mathcal{A}}^{(j)}]^{2} - 2 \, \mathbb{E}[\hat{\mathcal{A}}^{(i)}] \, \mathbb{E}[\hat{\mathcal{A}}^{(j)}] \\ [3mm]
& = & ( \, \mu^{(i)} - \mu^{(j)})^{2} + (\sigma^{(i)})^{2} + (\sigma^{(j)})^{2}
\end{array}
\end{equation}
This concludes the proof of Proposition~\ref{prop1copy}. 
\end{proof}
\vskip 1cm


\begin{prop}
\label{prop2copy}
For each feature channel $\mathcal{A}^{(l+1, c_{l+1})}$ in the $(l+1)$-th convolutional layer, the distance shift caused by removing the feature channel $\mathcal{N}^{(l, i)}$ from the $l$-th convolutional layer, as defined in Equation~\ref{eq:shift}, admits the following upper bound:
\begin{equation}
\label{eq:bound}
Dist( \, \mathcal{A}^{(l+1, c_{l+1})}, \mathcal{A}_{p}^{(l+1, c_{l+1})}) \le \lambda \times \hspace{-3mm} \min_{j \in \{1, ..., C_{l}\}} Dist( \, \mathcal{N}^{(l, i)}, \mathcal{N}^{(l, j)}) 
\end{equation}
where $\lambda = \frac{n_{l}}{n_{l+1}} K^{2} \lVert \mathcal{W}^{(i, c_{l+1})} \rVert_{2}^{2}$ and $K^{2}$ corresponds to the size of each kernel matrix $\mathcal{W}^{(c_{l}, c_{l+1})}$.
\end{prop}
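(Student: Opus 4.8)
The plan is to start from the exact identity for the distance shift recorded in Equation~\ref{eq:shift},
\[
Dist(\mathcal{A}^{(l+1, c_{l+1})}, \mathcal{A}_{p}^{(l+1, c_{l+1})}) = \frac{1}{n_{l+1}} \lVert (h(\mathcal{N}^{(l, i)}) - h(\mathcal{N}^{(l, j)})) \ast \mathcal{W}^{(i, c_{l+1})} \rVert_{2}^{2},
\]
and to bound its right-hand side by decoupling the three ingredients it contains: the discrete convolution with the $K \times K$ kernel, the non-linearity $h$, and the Euclidean norm of the difference of the two batch-normalized channels. Throughout I would fix the substituting channel $j$ and abbreviate $X := h(\mathcal{N}^{(l, i)}) - h(\mathcal{N}^{(l, j)}) \in \mathbb{R}^{H_{l} \times W_{l} \times B}$ and $W := \mathcal{W}^{(i, c_{l+1})} \in \mathbb{R}^{K \times K}$.

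\textbf{Step 1: a convolution norm inequality.} The key estimate is $\lVert X \ast W \rVert_{2}^{2} \le K^{2}\, \lVert W \rVert_{2}^{2}\, \lVert X \rVert_{2}^{2}$. One clean route is Young's convolution inequality $\lVert X \ast W \rVert_{2} \le \lVert X \rVert_{2}\, \lVert W \rVert_{1}$ followed by $\lVert W \rVert_{1} \le K\, \lVert W \rVert_{2}$ (Cauchy--Schwarz over the $K^{2}$ kernel entries); an equivalent elementary route writes each output entry as a sum of at most $K^{2}$ products, applies Cauchy--Schwarz to that sum, and sums over output positions using the fact that every entry of $X$ lies in the receptive window of at most $K^{2}$ outputs. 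This step carries the $K^{2}$ factor and is the only place where the stride/padding convention intervenes, so it is where I expect the most care to be needed; I would state it for the unit-stride same-padding convolutions used in the architectures of the paper, where the window-overlap count is exactly $K^{2}$ in the interior and no larger at the boundary, so the inequality holds with the stated constant.

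\textbf{Steps 2 and 3: Lipschitzness of $h$ and translation to channel distance.} Since $h$ is the ReLU of the convolution--BN--ReLU block described in Section~\ref{sec:bn}, it is $1$-Lipschitz coordinatewise, whence $\lVert X \rVert_{2}^{2} \le \lVert \mathcal{N}^{(l, i)} - \mathcal{N}^{(l, j)} \rVert_{2}^{2}$, and the definition of channel distance in Equation~\ref{eq:distance} gives $\lVert \mathcal{N}^{(l, i)} - \mathcal{N}^{(l, j)} \rVert_{2}^{2} = n_{l}\, Dist(\mathcal{N}^{(l, i)}, \mathcal{N}^{(l, j)})$ with $n_{l} = H_{l} \times W_{l} \times B$. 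Chaining the three steps,
\[
Dist(\mathcal{A}^{(l+1, c_{l+1})}, \mathcal{A}_{p}^{(l+1, c_{l+1})}) \le \frac{n_{l}}{n_{l+1}} K^{2}\, \lVert \mathcal{W}^{(i, c_{l+1})} \rVert_{2}^{2}\; Dist(\mathcal{N}^{(l, i)}, \mathcal{N}^{(l, j)}) = \lambda\, Dist(\mathcal{N}^{(l, i)}, \mathcal{N}^{(l, j)}).
\]
Finally, since $j$ was an arbitrary choice of substitute, this inequality holds in particular for the index $j$ attaining $\min_{j \in \{1, ..., C_{l}\}} Dist(\mathcal{N}^{(l, i)}, \mathcal{N}^{(l, j)})$ (choosing that most-similar channel in the substitution is the natural strategy), which is exactly the bound in Equation~\ref{eq:bound}. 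Apart from the convolution estimate of Step 1, every step is a one-line substitution, so I anticipate no further obstacle.
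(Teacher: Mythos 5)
Your proposal is correct and follows essentially the same route as the paper's proof: the paper bounds each output entry by Cauchy--Schwarz on the patch--kernel inner product, sums using the fact that each activation appears at most $K^{2}$ times in the convolution (your ``elementary route''), then applies the $1$-Lipschitz property of the activation and takes the minimum over the substitute index $j$. Your alternative derivation of Step 1 via Young's inequality $\lVert X \ast W \rVert_{2} \le \lVert X \rVert_{2}\lVert W \rVert_{1}$ plus $\lVert W \rVert_{1} \le K \lVert W \rVert_{2}$ yields the same constant and is a harmless variant, and your explicit remark that the substitute should be chosen as the minimizing channel is, if anything, a slightly more careful statement of the final step than the paper's.
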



\begin{proof}
For an arbitrary feature channel $\mathcal{N}^{(l, j)}$ in the $l$-th convolutional layer, let $\delta_{i, j}^{(l)} = h(\mathcal{N}^{(l, i)}) - h(\mathcal{N}^{(l, j)})$ and $\delta_{i, j}^{(l)}( \, p_{1}, p_{2})$ denote the image patch centered at $( \, p_{1}, p_{2})$. Note that $\delta_{i, j}^{(l)}( \, p_{1}, p_{2})$ is convolved with $\mathcal{W}^{(i, c_{l+1})}$ to generate the activation in the $p_{1}$-th row and $p_{2}$-th column in the $c_{l+1}$-th output feature channel. We transform the l.h.s of Equation~\ref{eq:bound} as follows:
\begin{equation}
\label{eq:cauchy}
\begin{array}{rcl}
& & Dist( \, \mathcal{A}^{(l+1, c_{l+1})}, \mathcal{A}_{p}^{(l+1, c_{l+1})}) \\ [2mm]
& = & \frac{1}{n_{l+1}} \lVert (h(\mathcal{N}^{(l, i)}) - h(\mathcal{N}^{(l, j)})) \ast \mathcal{W}^{(i, c_{l+1})} \rVert_{2}^{2} \\ [3mm]
& = & \frac{1}{n_{l+1}} \sum_{\substack{p_{1} \in \{1, ..., H_{l+1}\} \\ p_{2} \in \{1, ..., W_{l+1}\}}} (\delta_{i, j}^{(l)}( \, p_{1}, p_{2}) \ast \mathcal{W}^{(i, c_{l+1})})^{2} \\ [4mm]
& = & \frac{1}{n_{l+1}} \sum_{\substack{p_{1} \in \{1, ..., H_{l+1}\} \\ p_{2} \in \{1, ..., W_{l+1}\}}} | \langle \delta_{i, j}^{(l)}( \, p_{1}, p_{2}), \mathcal{W}^{(i, c_{l+1})} \rangle |^{2}
\end{array}
\end{equation}
Note that the linear convolution between $\delta_{i, j}^{(l)}( \, p_{1}, p_{2})$ and $\mathcal{W}^{(i, c_{l+1})}$, both of size $K \times K$, is equivalent to their inner product. Applying Cauchy-Schwarz inequality to Equation~\ref{eq:cauchy} gives:
\begin{equation}
\label{eq:temp}
\begin{array}{rcl}
& & Dist( \, \mathcal{A}^{(l+1, c_{l+1})}, \mathcal{A}_{p}^{(l+1, c_{l+1})}) \\ [2mm]
& \le & \frac{1}{n_{l+1}} \sum_{\substack{p_{1} \in \{1, ..., H_{l+1}\} \\ p_{2} \in \{1, ..., W_{l+1}\}}} \lVert \delta_{i, j}^{(l)}( \, p_{1}, p_{2}) \rVert_{2}^{2} \, \lVert \mathcal{W}^{(i, c_{l+1})} \rVert_{2}^{2} \\ [4mm]
& = & \frac{1}{n_{l+1}} \lVert \mathcal{W}^{(i, c_{l+1})} \rVert_{2}^{2} \, \sum_{\substack{p_{1} \in \{1, ..., H_{l+1}\} \\ p_{2} \in \{1, ..., W_{l+1}\}}} \lVert \delta_{i, j}^{(l)}( \, p_{1}, p_{2}) \rVert_{2}^{2} \\ [4mm]
& \le & \frac{1}{n_{l+1}} K^{2} \lVert \mathcal{W}^{(i, c_{l+1})} \rVert_{2}^{2} \, \lVert \delta_{i, j}^{(l)} \rVert_{2}^{2}
\end{array}
\end{equation}
The last inequality stems from the fact that each activation in $\delta_{i, j}^{(l)}$ appears at most $K^{2}$ times in the convolution operation. Actually, most activations in $\delta_{i, j}^{(l)}$ participate exactly $K^{2}$ times in the convolution operation, except for those lying near the border of $\delta_{i, j}^{(l)}$, which appear less often.

For the two most popular choices of non-linear activation functions in modern CNN architectures, sigmoid and ReLU, we have the extra property that $\max_{x \in \mathbb{R}}(\frac{{\rm d}h(x)}{{\rm d}x}) \leq 1$ and $\min_{x \in \mathbb{R}}(\frac{{\rm d}h(x)}{{\rm d}x}) \geq 0$, based on which we can easily derive the following inequality:
\begin{equation}
\label{eq:activation}
(h(x_{1}) - h(x_{2}))^{2} \le (x_{1} - x_{2})^{2} \quad \forall x_{1}, x_{2} \in \mathbb{R}
\end{equation}
Based on the conclusion of Equation~\ref{eq:activation}, we transform the inequality in Equation~\ref{eq:temp} as follows:
\begin{equation}
\begin{array}{rcl}
& & Dist( \, \mathcal{A}^{(l+1, c_{l+1})}, \mathcal{A}_{p}^{(l+1, c_{l+1})}) \\ [2mm]
& \le & \frac{1}{n_{l+1}} K^{2} \lVert \mathcal{W}^{(i, c_{l+1})} \rVert_{2}^{2} \, \lVert h(\mathcal{N}^{(l, i)}) - h(\mathcal{N}^{(l, j)}) \rVert_{2}^{2} \\ [3mm]
& \le & \frac{1}{n_{l+1}} K^{2} \lVert \mathcal{W}^{(i, c_{l+1})} \rVert_{2}^{2} \, \lVert \mathcal{N}^{(l, i)} - \mathcal{N}^{(l, j)} \rVert_{2}^{2} \\ [3mm]
& = & \frac{n_{l}}{n_{l+1}} K^{2} \lVert \mathcal{W}^{(i, c_{l+1})} \rVert_{2}^{2} \, Dist( \, \mathcal{N}^{(l, i)}, \mathcal{N}^{(l, j)})
\end{array}    
\end{equation}
Since $\mathcal{N}^{(l, j)}$ can be whichever feature channel in the $l$-th convolutional layer, we can further narrow the upper bound by taking the minimum over all $j \in \{1, ..., C_{l}\}$.
\begin{equation}
\begin{array}{rcl}
& & Dist( \, \mathcal{A}^{(l+1, c_{l+1})}, \mathcal{A}_{p}^{(l+1, c_{l+1})}) \\ [3mm] 
& \le & \frac{n_{l}}{n_{l+1}} K^{2} \lVert \mathcal{W}^{(i, c_{l+1})} \rVert_{2}^{2} \, \min_{j \in \{1, ..., C_{l}\}} Dist( \, \mathcal{N}^{(l, i)}, \mathcal{N}^{(l, j)}) \\ [2mm]
& = & \lambda \times \min_{j \in \{1, ..., C_{l}\}} Dist( \, \mathcal{N}^{(l, i)}, \mathcal{N}^{(l, j)}) 
\end{array}
\end{equation}
where $\lambda = \frac{n_{l}}{n_{l+1}} K^{2} \lVert \mathcal{W}^{(i, c_{l+1})} \rVert_{2}^{2}$ \hspace{1pt} and \hspace{1pt} $n_{l} = H_{l} \times W_{l} \times B$ (resp. $n_{l+1}$).

This concludes the proof of proposition~\ref{prop2copy}. 
\end{proof}


\section*{C. \hspace{1mm} Pseudo code}
\label{appendixC}

Pseudo code of the proposed channel-similarity-based pruning algorithm.

{\centering
\begin{minipage}{.68\linewidth}
\centering
\begin{algorithm}[H]
\caption{Channel-similarity-based pruning via hierarchical clustering of feature channels}
\label{al:method}
\begin{algorithmic}
    \STATE {\bfseries Input:} Threshold distance $t$ and feature channels before pruning $\mathcal{N} = \{\mathcal{N}^{(l, c_{l})} \, | \, c_{l} \in \{1, ..., C_{l}\}, l \in \{1, ..., L\}\}$
    \STATE {\bfseries Output:} Retained channels $\mathcal{N}_{r} = \{\mathcal{N}_{r}^{(l)} \, | \, l \in \{1, ..., L\}\}$
    \vskip 3mm
    \FOR{$l = 1$ {\bfseries to} $L$}
    \vskip 1mm
        \STATE {\bfseries Initialize} $\mathcal{D}^{(l)} \in \mathbb{R}^{C_{l} \times C_{l}}$ and $\mathcal{N}_{r}^{(l)} = \emptyset$
        \vskip 1mm
        \FOR{$i = 1$ {\bfseries to} $C_{l}$}
        \vskip 1mm
            \FOR{$j = i + 1$ {\bfseries to} $C_{l}$}
            \vskip 1mm
                \STATE $\mathcal{D}^{(l)}( \, i, j) \leftarrow Dist( \, \mathcal{N}^{(l, i)}, \mathcal{N}^{(l, j)})$ \\ 
                \hspace{3.5cm} \# Equation~4 of the main paper
                \STATE $\mathcal{D}^{(l)}( \, j, i) \leftarrow  \mathcal{D}^{(l)}( \, i, j)$
                \vskip 1mm
            \ENDFOR
            \vskip 1mm
        \ENDFOR
        \vskip 1mm
        \STATE $\mathcal{D}_{min}^{(l)} \leftarrow \min_{i, j} \mathcal{D}^{(l)}( \, i, j)$ and $\mathcal{D}_{max}^{(l)} \leftarrow \max_{i, j} \mathcal{D}^{(l)}( \, i, j)$
        \vskip 1mm
        \STATE Normalization: $\mathcal{D}^{(l)} \leftarrow (\mathcal{D}^{(l)} - \mathcal{D}_{min}^{(l)}) \times (\mathcal{D}_{max}^{(l)} - \mathcal{D}_{min}^{(l)})^{-1}$
        \vskip 1mm
        \STATE Perform hierarchical clustering with threshold $t$ on $\mathcal{N}^{(l)}$ using $\mathcal{D}^{(l)}$ to obtain $\mathcal{G}^{(l)} = \{\mathcal{G}^{(l, m)} \, | \, m \in \{1, ..., M\}\}$
        \vskip 1mm
        \FOR{$m = 1$ {\bfseries to} $M$}
        \vskip 1mm
            \STATE $\mathcal{N}_{m}^{(l)} \leftarrow \arg\max_{\mathcal{N}^{(l, c_{l})} \in \mathcal{G}^{(l, m)}} |\gamma^{(c_{l})}|$
            \vskip 1mm
            \STATE $\mathcal{N}_{r}^{(l)} \leftarrow \mathcal{N}_{r}^{(l)} \, \cup \, \mathcal{N}_{m}^{(l)}$
            \vskip 1mm
        \ENDFOR
        \vskip 1mm
    \ENDFOR
\end{algorithmic}
\end{algorithm}
\end{minipage}
\par}


\section*{D. \hspace{1mm} Visualization}
\label{appendixD}

We visualize the channel distance matrix computed via the activation-value-based approach (averaged over 20 trials using randomly sampled image batches) and that estimated via the proposed probabilistic approach for a number of convolutional layers of VGG-16 on CIFAR-10 in Figure~\ref{fig:distance-matrix}. Their absolute difference is added for more visually accessible illustration.  

\begin{figure}[h]
\centering
\includegraphics[width=0.99\linewidth]{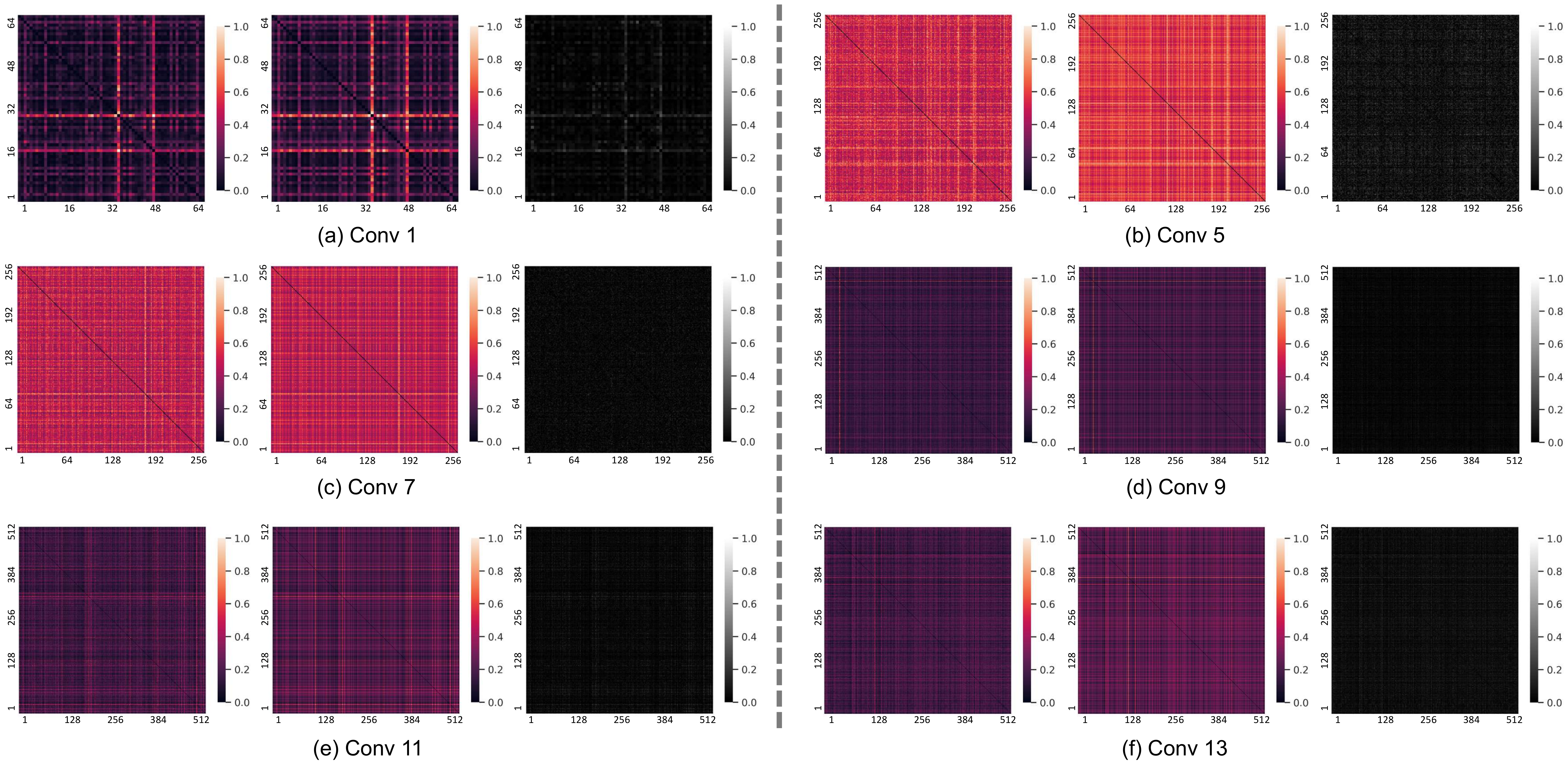}
\caption{Visualization of the channel distance matrix computed using different approaches. The leftmost figure of each triplet shows the matrix obtained via activation-value-based approach (averaged over 20 trials of batch size 256), the middle shows that estimated via probabilistic approach, while the rightmost shows their absolute difference. Best viewed in color.}
\label{fig:distance-matrix}
\end{figure}


\end{document}